\documentclass[11pt]{article}%
\usepackage{mathrsfs}
\usepackage{latexsym}
\usepackage{amsmath}
\usepackage{amssymb}
\usepackage{amsfonts}
\usepackage{graphicx}
\usepackage{float}
\usepackage{amsthm}
\usepackage{epstopdf}
\usepackage{booktabs}
\usepackage{lscape}
\usepackage{chngpage}
\usepackage{array}
\usepackage{color}
\usepackage{textcomp}
\usepackage{cite}
\usepackage{subfigure}
\usepackage{bbding}%
\usepackage{algorithmic}
\usepackage{algorithm}
\usepackage{graphicx}
\usepackage{textcomp}
\usepackage{setspace}
\usepackage{epsfig,float}
\usepackage{multirow}
\usepackage{booktabs}
\usepackage{algorithm}
\usepackage{color} 
\setcounter{MaxMatrixCols}{30}
\providecommand{\U}[1]{\protect\rule{.1in}{.1in}}
\providecommand{\U}[1]{\protect\rule{.1in}{.1in}}
\providecommand{\U}[1]{\protect\rule{.1in}{.1in}}
\RequirePackage[colorlinks,citecolor=blue,urlcolor=blue]{hyperref}
\textwidth=159mm\textheight=220mm \oddsidemargin=6mm
\evensidemargin=6mm \topmargin=-5mm
\providecommand{\U}[1]{\protect\rule{.1in}{.1in}}
\numberwithin{equation}{section}
\newtheorem{theorem}{Theorem}[section]

\graphicspath{{figures/}}

\begin{document}
%
%
%
%
%
%
%

\title{\vspace{-1in}\parbox{\linewidth}{
} \vspace{\bigskipamount} \\A Group Norm Regularized Factorization Model for Subspace Segmentation \footnotemark[1] }
\author{Xishun Wang\footnotemark[4] \footnotemark[2],\ \ Zhouwang Yang\footnotemark[3] \ \ Xingye Yue\footnotemark[2], \ \ Hui Wang\footnotemark[5]}
\date{\today}
\maketitle
\vspace{-6mm}
\renewcommand{\thefootnote}{\fnsymbol{footnote}}
\footnotetext[1]{This work was supported by the National Natural Science Foundation of China under Grant 11971342.}
\footnotetext[2]{Center for Financial Engineering, Soochow University, Suzhou 215006, China.}

\footnotetext[3]{School of Mathematical Sciences, University of Science and Technology of China, Hefei 230026, China.}
\footnotetext[5]{Big Data Service Department, State Grid Corporation Customer Service Center, Nanjing 210000, China.}

\footnotetext[4]{Corresponding author, xswang11@stu.suda.edu.cn}
\indent\textbf{Abstract }Subspace segmentation assumes that data comes from the union of different subspaces and the purpose of segmentation is to partition the data into the corresponding subspace. Low-rank representation~(LRR) is a classic spectral-type method for solving subspace segmentation problems, that is, one first obtains an affinity matrix by solving a LRR model and then performs spectral clustering for segmentation. This paper proposes a group norm regularized factorization model~(GNRFM) inspired by the LRR model for subspace segmentation and then designs an Accelerated Augmented Lagrangian Method~(AALM) algorithm to solve this model. Specifically, we adopt group norm regularization to make the columns of the factor matrix sparse, thereby achieving a purpose of low rank, which means no Singular Value Decompositions~(SVD) are required and the computational complexity of each step is greatly reduced. We obtain affinity matrices by using different LRR models and then performing cluster testing on different sets of synthetic noisy data and real data, respectively. Compared with traditional models and algorithms, the proposed method is faster and more robust to noise, so the final clustering results are better. Moreover, the numerical results show that our algorithm converges fast and only requires approximately ten iterations.

\indent\textbf{Key words:} Low-rank representation, Group norm regularization, Subspace segmentation, Affinity matrix, Spectral clustering.


\section{Introduction}

With the advent of the era of big data, we are confronted with a great deal of data every day. Although the data volume is large, it may only come from several low-rank subspaces. Subspace segmentation\cite{liu2010robust} divides data into several clusters and each cluster is a subspace; this problem arises in machine learning, computer vision, image processing, finance, and other fields\cite{lu2006combined,rao2010motion,ma2007segmentation,Zhang2005Subspace,Deng2013Low}. Subspace segmentation is an important clustering problem, which is mainly solved by the following four methods: mixtrue of Gaussian\cite{yang2006robust}, factorization \cite{gruber2004multibody}, algebraic\cite{ma2008estimation}, and spectral-type methods \cite{Elhamifar2012Sparse,liu2010robust}.

In the spectral-type method, one first calculates an affinity matrix and then spectral clustering (such as Normalized Cuts~($N\_Cut$)~\cite{shi2000normalized} ) is performed.
The main variation among spectral-type methods is  the different affinity matrix learning methods. Spectral clustering has attractive advantages, for example the algorithm is efficient, the data can be of any shape, the method is not sensitive to abnormal data, and can be applied to high-dimensional problems. Recently there have been many studies and developments in spectral clustering, such as \cite{yang2017discrete,yang2015multitask,Yang2013Discriminative,wang2019an,wang2020active,ding2019a}.

LRR is a classic, effective spectral-type method for solving subspace segmentation problems. In 2010, the Low-Rank Representation (LRR)~\cite{liu2010robust} problem was proposed by Liu $et \ al$. They assume that data samples come from the union of multiple subspaces, and the purpose of the LRR method is to denoise and obtain samples from the corresponding subspaces to which they belong. And they proved that LRR accurately obtains each real subspace for clean data. For noisy data, LRR approximately restores the subspace of the original data with theoretical guarantees. In reference~\cite{liu2010robust}, the spectral clustering performance using the affinity matrix obtained by LRR is more accurate and robust than other methods.

When solving the LRR problem, traditional methods tend to minimize the nuclear norm to approximate the minimum rank in the objective function. This is a convex approximation that guarantees convergence in the designed algorithm. However, singular value decomposition~(SVD) is required in order to solve nuclear norm problems. An SVD is time- consuming, and the computation complexity is $O(n^3)$ for an $n\times n$ affinity matrix. Many classic algorithms employ SVD to solve LRR,  such as the Accelerated Proximal Gradient method (APG~\cite{toh2010accelerated}), Alternating Direction Method (ADM~\cite{lin2010augmented}),  and Linearized Alternating Direction Method with Adaptive Penalty (LADMAP~\cite{lin2011linearized}). APG solves an approximated problem of LRR, but the clustering results are inferior. LADMAP performs best among these algorithms; however,  the calculation speed is still slow, especially for high-dimensional data. Along this line of thinking, accelerated LADMAP~(LADMAP(A)~\cite{lin2011linearized}) is proposed by Lin $et \ al$. They used skinny SVD technology to reduce the complexity to $O(rn^2)$, where $r$ is the rank of the affinity matrix. However, the rate of convergence is sub-linear, which requires  more iterations, and the rank depends on hyperparameter selection. Lu $et \ al.$ introduced a smooth objective function with regularization terms and used the Iterative Reweighted Least Squares (IRLS) method to solve objective functions~\cite{lu2014smoothed}. This new method does not need SVD, but the computation complexity of matrix multiplication is $O(n^3)$. Their numerical experiments show that the convergence is linear, so it is faster than LADMAP(A) in some cases.

In order to avoid the need for SVD calculation, Chen et al. offered the matrix factorization LRR model and Hidden Matrix Factors Augmented Lagrangian Method (HMFALM)~\cite{chen2018algorithm}. They decomposed an affinity matrix into $UV$ and then used the Augmented Lagrangian Method~(ALM) to solve the model, where $U \in R^{n\times r}$, $V \in R^{r \times n}$. Factorization model is difficult to determine rank, so Chen et al. designed a greedy method to traverse the rank $r$, that is, they first choose a proper interval $d$ and run the algorithm on the ranks 1, $d+1$, $2d+1$,..., $k*d+1$,..., and stop when the results begin to worsen. Thus, this process searches through the options one by one to find the optimum rank. Although the original problem becomes  non-convex, the algorithm does not require SVD; only multiplication of the  factor  matrix  is  required.  Its  complexity  is $O(rmn)$, where $m$ is  the dimension of the data.

HMFALM requires an outer loop to find rank $r$, the inner loop iterates to meet the stopping criterion, and rank-finding result is heavily dependent on the given hyperparameter. To overcome the shortcomings of HMFALM, we introduce group norm regularization of $U$ to design an adaptive rank-finding matrix factorization model (GNRFM) to solve LRR. We first let $U \in R^{n\times K}$, where $K$ is a larger number. Group norm regularization $l_{2,1}$
makes some columns of the factor matrix~$U$ become zero vector columns. In this manner, the rank of the affinity matrix is automatically reduced to achieve the purpose of adaptively adjusting the rank. We design the Accelerated Augmented Lagrangian Method~(AALM) algorithm to solve GNRFM. In summary, the contributions of this work include:
\begin{itemize}
\item  We  first use the group norm regularization method to solve the rank minimum problem. Specifically, we design the GNRFM model to solve the subspace segmentation problem. Compared with the traditional nuclear norm LRR model,
    our model has less computational complexity. Compared with the factorization model HMFALM\cite{chen2018algorithm}, the GNRFM model can adaptively find ranks without greedy searches.
\item The group regularization term also has positive anti-noise effects, so the GNRFM is more robust.
\item We design the AALM algorithm to solve GNRFM, which uses the acceleration technique of one-step inner iteration and deletes the zero vector column to reduce the computational complexity in each step. The numerical results show that the AALM algorithm converges in about ten steps.
\end{itemize}


The structure of this paper is as follows. Section~2 introduces the LRR problem,  its nuclear norm approximation model~\cite{liu2010robust}, and the matrix factorization model \cite{chen2018algorithm}. Section~3 introduces our model, GNRFM, details the Accelerated ALM (AALM) for our model, gives time complexity analysis and introduces how to use GNRFM's solution for spectral clustering. Numerical experimental results are reported in Section~4. Finally, Sections~5 concludes this paper.

\section{The LRR problem and two types of models}
In order to facilitate a shared understanding of notations, we offer a summary in Table 1 of the primary notations used in this paper.

\begin{table}[!htbp]
 \caption{ Summary of Common Notations }
 \footnotesize
\setlength{\tabcolsep}{5pt}
 \centering
 \begin{tabular}{c|l}
  \toprule[1.5pt]
   $Z$& The solution to the LRR problem  \\
   $W$ & The affinity matrix \\
   $X$ & The data matrix \\
   $E$ & The LRR problem's error matrix  \\
   $U$ & The factor matrix of factorization models HMFALM and GNRFM \\
   $V$ & The factor matrix of factorization models HMFALM and GNRFM \\
   $m$ & The dimension of a data matrix \\
   $n$ & The number of data samples \\
   $r$ & The rank of Z (column U and row V)\\
   $K$ & The oversize rank estimate \\
   $\mu$ &The regularized parameter in the LRR and HMFALM models\\
   $\mu_U$ &The group norm regularized parameter in GNRFM\\
   $\mu_V$ &The Frobenius norm regularized parameter in GNRFM\\
   $Y$ & The multiplier matrix in the ALM and AALM algorithms\\
   $\beta$ & The penalty parameter in the ALM and AALM algorithms\\
   $t$ & The outer iteration step\\
   $l$ & The inner iteration step\\
   $\sigma$ & The noise intensity \\
   $\widetilde{k}$ & The number of clusters\\
   $||.||_F$ & The Frobenius norm\\
   $||.||_{2,1}$ & The group norm $l_{2,1}$\\
   $()_{\star i}$ & The i-th column of a matrix \\
   $()_{i \star }$ & The i-th row of a matrix \\
  \bottomrule[1.5pt]
 \end{tabular}
 \label{bs1}
\end{table}

First, let us recall the following LRR problem:
\begin{equation}\label{1}
\min_Z rank(Z), \ \   s.t. \ X=XZ,
\end{equation}
where $X \in R^{m \times n}$~is the data matrix, $m$ is the dimension of the data vector, $n$ is the number of data vectors,~and $Z \in R^{n \times n}$. We refer to the optimal solution $Z^\star$ of the above problem as the "lowest-rank representation" of data $X$ with respect to a dictionary $X$. This is an NP-hard problem because the rank is $l_0$ norm, and the solution is not unique. As in the classic method for solving low-rank problems, Liu et al.~\cite{liu2010robust} took advantage of the nuclear norm to approximate and obtain the following convex optimization problem:
\begin{equation}\label{2}
\min_Z ||Z||_\star, \ \  s.t. \ X=XZ.
\end{equation}

Liu et al.~\cite{liu2012robust} proved that under some conditions, the solution of \eqref{2} is unique and  is one of the solutions to \eqref{1}. This solution $Z^\star$ can be transformed to obtain an affinity matrix for data $X$, which can then be used for subspace segmentation. The uniqueness of \eqref{2} is given by Wei and Lin \cite{siming2011analysis}:

\begin{theorem}
Suppose the skinny SVD of $X$ is $X=\widetilde{U}\widetilde{\Sigma} \widetilde{V}'$, then the minimizer to problem \eqref{2} is
uniquely defined by \\
\begin{equation}
Z^\star=\widetilde{V}\widetilde{V}^{T}.
\end{equation}
\end{theorem}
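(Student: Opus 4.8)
The plan is to convert the feasibility constraint $X=XZ$ into a condition stated purely in terms of $\widetilde{V}$, then to exhibit $Z^\star=\widetilde{V}\widetilde{V}^T$ as a feasible point attaining the smallest possible value of $||\cdot||_\star$, and finally to analyze the equality case to obtain uniqueness. Write $r=\operatorname{rank}(X)$, so that $\widetilde{U}\in R^{m\times r}$ has orthonormal columns, $\widetilde{\Sigma}\in R^{r\times r}$ is diagonal and invertible, and $\widetilde{V}\in R^{n\times r}$ satisfies $\widetilde{V}^T\widetilde{V}=I_r$. Substituting the skinny SVD into $XZ=X$ gives $\widetilde{U}\widetilde{\Sigma}\widetilde{V}^TZ=\widetilde{U}\widetilde{\Sigma}\widetilde{V}^T$; since $\widetilde{U}$ has full column rank (so $\widetilde{U}^T\widetilde{U}=I_r$) and $\widetilde{\Sigma}$ is invertible, I can cancel both on the left to reach the equivalent feasibility condition $\widetilde{V}^TZ=\widetilde{V}^T$. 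The candidate satisfies it because $\widetilde{V}^T(\widetilde{V}\widetilde{V}^T)=(\widetilde{V}^T\widetilde{V})\widetilde{V}^T=\widetilde{V}^T$, and since $\widetilde{V}\widetilde{V}^T$ is an orthogonal projector of rank $r$ its singular values are $r$ ones, so $||Z^\star||_\star=r$.

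For the lower bound I would use unitary invariance together with the fact that multiplying by a matrix of operator norm at most one does not increase the nuclear norm. From $\widetilde{V}^TZ=\widetilde{V}^T$ it follows that $\widetilde{V}^TZ\widetilde{V}=\widetilde{V}^T\widetilde{V}=I_r$, and because both $\widetilde{V}$ and $\widetilde{V}^T$ have operator norm one, $r=||I_r||_\star=||\widetilde{V}^TZ\widetilde{V}||_\star\le||Z||_\star$. Hence every feasible $Z$ obeys $||Z||_\star\ge r=||Z^\star||_\star$, which already shows $Z^\star$ is a minimizer.

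The delicate step, which I expect to be the main obstacle, is uniqueness. Complete $\widetilde{V}$ to an orthogonal matrix $[\widetilde{V},\widetilde{V}_\perp]$ with $\widetilde{V}_\perp\in R^{n\times(n-r)}$, and express any feasible $Z$ in the induced block coordinates. Feasibility $\widetilde{V}^TZ=\widetilde{V}^T$ pins the top two blocks to $\widetilde{V}^TZ\widetilde{V}=I_r$ and $\widetilde{V}^TZ\widetilde{V}_\perp=0$, so in these coordinates $Z$ takes the form $\begin{pmatrix}I_r & 0\\ C & D\end{pmatrix}$ with $C=\widetilde{V}_\perp^TZ\widetilde{V}$ and $D=\widetilde{V}_\perp^TZ\widetilde{V}_\perp$, and $||Z||_\star$ equals the nuclear norm of this block matrix by unitary invariance. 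Averaging with the sign-flip orthogonal matrix $W$ having diagonal blocks $-I_r$ and $I_{n-r}$ (so $WZW$ has the form $\begin{pmatrix}I_r & 0\\ -C & D\end{pmatrix}$), the triangle inequality gives $||Z||_\star=\tfrac12||Z||_\star+\tfrac12||WZW||_\star\ge\left\|\begin{pmatrix}I_r & 0\\ 0 & D\end{pmatrix}\right\|_\star=r+||D||_\star$, so attaining the value $r$ forces $D=0$. With $D=0$ the surviving matrix has nonzero singular values $\sqrt{1+\sigma_i(C)^2}$, whose sum equals $r$ only when $C=0$. Thus the unique optimizer is $\begin{pmatrix}I_r & 0\\ 0 & 0\end{pmatrix}$ in these coordinates, that is $Z=\widetilde{V}\widetilde{V}^T$. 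The reduction of the constraint and the lower bound are routine once feasibility is rewritten through $\widetilde{V}$; the crux is this equality analysis showing that both remainder blocks $C$ and $D$ must vanish.
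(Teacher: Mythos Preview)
Your proof is correct and self-contained. Note, however, that the paper does not actually supply its own proof of this theorem: it is quoted as a result of Wei and Lin \cite{siming2011analysis}, so there is no in-paper argument to compare against directly. Your route---reducing feasibility to $\widetilde{V}^TZ=\widetilde{V}^T$, establishing the lower bound $\|Z\|_\star\ge r$ via the contraction of the nuclear norm under multiplication by operator-norm-one matrices, and then isolating uniqueness with the sign-flip averaging trick together with the explicit singular values $\sqrt{1+\sigma_i(C)^2}$ of the residual block---is a clean elementary argument. The standard proof in the cited references (see also \cite{liu2012robust}) instead identifies the minimizer through the general closed-form solution $Z^\star=X^{+}X$ for nuclear-norm minimization under $XZ=X$, and then recognizes $X^{+}X=\widetilde{V}\widetilde{V}^T$; your version trades that pseudoinverse/subdifferential machinery for a direct inequality analysis with an explicit equality case, which has the merit of being fully self-contained.
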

This formula naturally implies that $Z^{\star}$ is precisely able to recover an affinity matrix in ~\cite{costeira1998multibody}.

Since the solution of \eqref{2} is  one of the solutions for \eqref{1}, we recommend referring to see corallary 4.1 in~\cite{liu2012robust}. To make the model robust to noise, Liu $et \ al$.~\cite{liu2012robust} proposed the following noisy LRR nuclear norm model:
 \begin{equation}\label{3}
 \min_{Z,E}||Z||_{\star}+\mu ||E||_{2,1},\  s.t. \ X=XZ+E,
 \end{equation}
 where $E \in R^{m\times n}$, $||E||_{2,1}=\sum_{j}\sqrt{\sum_i(E_{i,j})^2}$ is the group norm of $E$.

 In order to solve \eqref{3}, several algorithms based on SVD were designed, which are lack of speed. So Chen et al.~\cite{chen2018algorithm} put $Z$ into a low rank factorization $Z=\hat{U}V$, and they proposed the following matrix factorization model:
 \begin{equation}\label{4}
\min_{\hat{U},V,E} ||E||_{2,1} \ s.t. \ X=X\hat{U}V+E,
\end{equation}
where $\hat{U} \in R^{n\times r}$, $V \in R^{r \times n}$. If we write $U=X\hat{U}$, then the model is expressed as follows:
 \begin{equation}\label{5}
\min_{U,V,E} ||E||_{2,1} \ s.t. \ X=UV+E.
\end{equation}

However, the rank $r$ of this model must be specified. Chen $et \ al$.
\cite{chen2018algorithm} designed a greedy method to find the optimal rank:\\
1.~Provide the interval $d$ and hyperparameter $\mu$.\\
2.~Solve the problem \eqref{5}~when $r=1, d+1,\ 2d+1，...,kd+1,...$ and stop
 when $r+\mu ||E(r)||_{2,1}$ begins to worsen. Thus, they search through the options one by one to find the optimum rank.

If we assume the optimal rank $r=r^{\star}$, in this case, the solution obtained from \eqref{5} is $(U^\star,V^\star,E^\star)$. According to the fact that data space $X$ is full and the theorem in (\cite{liu2010robust, liu2012robust}), we obtain the optimal $Z^\star$ by $Z^\star=X^{+}U^\star V^\star$~($X^{+}$ is the pseudoinverse of $X$). The obtained rank is heavily dependent on the hyperparameter $\mu$, and numerous additional iterative calculations must be done before the optimal rank is obtained. In order to find the rank adaptively to reduce the number of iterations, we design a new model in section~3, which adds the group norm regularization term $||U||_{2,1}$ to the model \eqref{5}.

\section{The Group norm regularized factorization model and  algorithm }
For calculating speed, the matrix factorization method is superior to the nuclear norm approximation method. However, it is difficult to estimate the rank of the restored matrix $Z$ using the former method. So, our goal is to find an adaptive method for estimating the ranks of different types of data. The rank of a matrix is determined by the number of rows or columns in the factor matrix, and the rank of the matrix is reduced if some columns are zero. So, we first take an oversized factor matrix, and make the number of columns in the factor matrix zero by introducing group norm regularization; this achieves the purpose of adjusting the rank adaptively.

\subsection{The Group norm regularized factorization model}
If we assume that $X\in R^{m\times n}$ is a matrix of data samples, $m$ is the dimension of the data, $n$ is the number of data points, and some data contain noise. We hope to remove noise and represent clean data with a low rank to obtain an affinity matrix. We obtain the group norm regularized factorization model (GNRFM) by adding the group norm regularization term $||U||_{2,1}$ to \eqref{5}:
  \begin{equation}\label{6}
\min_{U,V,E} ||E||_{2,1}+\mu_U ||U||_{2,1}+ \frac{\mu_V}{2} ||V||_F^2 \ s.t. \ X=UV+E,
\end{equation}
where $X \in R^{m \times n}$, $U \in R^{m \times K}$, $V \in R^{K \times n}$, $K$ is a larger number, and $||.||_F$ is classic Frobenius norm. $||U||_{2,1}$ is the group norm of $U$, and $||U||_{2,1}=\sum_{j}\sqrt{\sum_i(U_{i,j})^2}$.
The true rank $r$ of $X$ is usually unknown, and $K$ is a relatively large initial guess, such as $K=n$.~Owing to the group norm regularization, some columns in U will be equal to zero under proper parameters $\mu_U$, $\mu_V$. Assuming $s$ columns in $U$ will equal zero based on the group norm $||.||_{2,1}$, then we can get $rank(UV)\leq K-s$. So, we reach the goal of adjusting the rank of $UV$ adaptively only by introducing group norm regularization.  $||V||_F^2$ is also very important because $U$ and $V$ play roles of balance and mutual restraint in GNRFM.

In summary, the GNRFM model adaptively estimates ranks for different types of data without the need to design additional updated rank strategies. The regularization terms make the model more resistant to noise. Of course, we introduce two extra hyperparameters $\mu_U$ and $\mu_V$, but numerical results show that our model is less sensitive to hyperparameters relative to other models.
\subsection{The Augmented Lagrangian Method~(ALM) }
In this section, we introduce the ALM method to solve \eqref{6}. For such  bi-convex problems, i.e, convex in U for a fixed V and convex in V for a fixed U, Sun \cite{sun2014alternating}, Shen \cite{shen2014augmented}, Xu \cite{xu2012alternating}, and Chen \cite{chen2018algorithm} all used similar ALM methods to solve such bi-convex problems, and obtained relatively good numerical results. The augmented Lagrange function in formula \eqref{6} is defined as follows :
\begin{eqnarray}\label{7}
L_{\beta}(U,V,E,Y)&= &||E||_{2,1}+\mu_U ||U||_{2,1}+ \frac{\mu_V}{2} ||V||_F^2 + <Y,UV+E-X>+ \frac{\beta}{2}||UV+E-X||_F^2, \notag \\
 &= & ||E||_{2,1}+\mu_U ||U||_{2,1}+ \frac{\mu_V}{2} ||V||_F^2  +  \frac{\beta}{2}||UV+E-X+\frac{Y}{\beta}||_F^2-\frac{||Y||_F^2}{2\beta}, 
\end{eqnarray}
where $\beta$ is a penalty parameter, $Y \in R^{m\times n}$ is the Lagrange multiplier corresponding to the constraint $X=UV+E$ and $<,> $ is the usual inner product.

 It is well-known that, starting from $Y_0=0$, the classic Augmented Lagrangian Method solves
\begin{equation} \label{2.4}
\min_{U,V,E}L_{\beta_t}(U,V,E,Y_t),
\end{equation}
at the $t$-th iteration
and then updates $Y_{t+1}=Y_{t}+\beta_t(U_{t+1}V_{t+1}+E_{t+1}-X)$. Similar to classical ALM, we  update $E$ and $( U,V )$ at the $t$-th iteration separately:
\begin{subequations}\label{8}
\begin{align}
(U_{t+1},V_{t+1})=&\mathop{argmin}\limits_{U,V}~\mu_U ||U||_{2,1}+ \frac{\mu_V}{2} ||V||_F^2  + \frac{\beta_t}{2}||UV+E_t-X+\frac{Y_t}{\beta_t}||_F^2,\\
E_{t+1}=&\mathop{argmin}\limits_{E}~ ||E||_{2,1} + \frac{\beta_t}{2}||U_{t+1}V_{t+1}+E-X  +\frac{Y_t}{\beta_t}||_F^2.
\end{align}
\end{subequations}

It is difficult to solve (10a) directly because $U$ and $V$ are coupled, so we propose a method called the inner iteration technique to obtain an approximate solution:
\begin{equation}
U^{l+1}_{t}=\mathop{argmin}\limits_{U}~\mu_U ||U||_{2,1} + \frac{\beta_t}{2}||UV^{l}_{t}+E_t-X+\frac{Y_t}{\beta_t}||_F^2,
\end{equation}
\begin{equation}\label{9}
V^{l+1}_{t}=\mathop{argmin}\limits_{V}~\frac{\mu_V}{2} ||V||_F^2 + \frac{\beta_t}{2}||U^{l}_{t}V+E_t-X+\frac{Y_t}{\beta_t}||_F^2,
\end{equation}
where $l$ represents the inner iteration steps. At this point, $V$ is solved by least square method:
\begin{equation}\label{10}
V^{l+1}_{t}=(\mu_V I_K+\beta_t{U_t^l}^TU_t^l)^{-1}\beta_t{U_t^l}^T(X-E_t-\frac{Y_t}{\beta_t}),
\end{equation}
where $I_K$ is a $K$-order identity matrix.

Since $U$ is difficult to solve, inspired by~\cite{lin2011linearized}, we conduct quadratic linearizing in (11) and add a proximal term:
\begin{equation}\label{11}
U^{l+1}_{t}=\mathop{argmin}\limits_{U}~\mu_U ||U||_{2,1} +\frac{\beta_t*\xi(V_t^l)}{2}||U-U_t^l +(U_t^lV^{l}_{t}+E_t-X+\frac{Y_t}{\beta_t}){V_t^l}^T/\xi(V_t^l)||_F^2,
\end{equation}
where $\xi(V_t^l)=1.02\sigma^2_{max}(V_t^l)$ is the same as proposed in \cite{lin2011linearized}.

  We obtain the solution to \eqref{11} by soft threshold shrinkage:
\begin{equation} \label{12}
(U_{t}^{l+1})_{*i}=max(||(Q_{*i})||_2-\frac{
\mu_U}{\beta_t*\xi(V_t^l)},0)\frac{Q_{*i}}{||Q_{*i}||_2},
\end{equation}
where $Q_{*i}=(U_t^l-(U_t^lV^{l}_{t}+E_t-X+\frac{Y_t}{\beta_t}){V_t^l}^T/\xi(V_t^l))_{*i}$, $i=1,2...,K$, and $X_{*i}$ represents the $i$-th column of $X$.

Owing to the soft-thresholding rule, some columns in $U$ are equal to zero, so we obtain a low-rank solution. Similarly, we get an explicit expression of $E$:
 \begin{equation} \label{13}
 (E_{t+1})_{*i}= max(||(X-U_{t+1}V_{t+1}-Y_t/\beta_t)_{*i}||_2-\frac{1}{\beta_t},0)\times \frac{(X-U_{t+1}V_{t+1}-Y_t/\beta_t)_{*i}}{||(X-U_{t+1}V_{t+1}-Y_t/\beta_t)_{*i}||_2},
 \end{equation}
where $i=1,2...,n.$

 To avoid ALM converging to an infeasible point, we adopt the strategies  proposed by Lu and Zhang \cite{lu2012augmented} to update $\beta_t$ in the third part of Algorithm~1. At this point, we have given the explicit formula for updating the variables in \eqref{2.4} at the $t$-th iteration.
According to the above updating formula, we employ Algorithm~1 to solve problem \eqref{6}.

\begin{algorithm}[htb]
\setstretch{1}
\caption{: ALM for GNRFM}
\label{alg.Framwork}
\begin{algorithmic}
\REQUIRE  Data $X$, an overestimated rank $K $, and hypermeters $\mu_U, \mu_V$; \\
\ENSURE $U_0 \in R^{m \times K}$, $V_0 \in R^{K \times n}$ are obtained by the SVD of X ($U_0=\widetilde{U}, V_0=\widetilde{\Sigma}\widetilde{V}$),  $Z_0=Y_0=\mathbf{0}$, $\beta_0$, $\rho>1$,
$\zeta \in (0,1)$, $\nu \in (0,1)$, $\{\epsilon_t\}\geq0 $, with $\lim\limits_{t\rightarrow \infty}\epsilon_t =0$, and a sufficiently large constant $T>\max\{f(x),L_{\beta(x,Y)}\}$ and $t=0$;
\WHILE{not convergent}
\STATE 1. With $Y=Y_t$, and $\beta=\beta_t$, compute $V_{t+1},U_{t+1}$ according to \eqref{10} and \eqref{12} to find an  approximate solution for \eqref{8}. Then, update $E_{t+1}$  according to \eqref{13}, so we can find an approximate point $x_{t+1}=(U_{t+1},V_{t+1},E_{t+1})$~s.t.
\begin{equation}
||\nabla_xL_{\beta_t}(x,Y_t)||\leq \epsilon_{t+1},~~~~~~~~~ L_{\beta_t}(x_{t+1},Y_t)<T;
\end{equation}
\STATE 2. Set
\begin{equation}
 Y_{t+1}=Y_{t}+\beta_t(U_{t+1}V_{t+1}+E_{t+1}-X);
 \end{equation}
 3. If $t>0$ and
 \begin{equation}
 ||U_{t+1}V_{t+1}+E_{t+1}-X||\leq \zeta ||U_tV_t+E_t-X||,
 \end{equation}
 then set $\beta_{t+1}=\beta_t.$ Otherwise, set
 \begin{equation}
 \beta_{t+1}=\max\{\rho \beta_t,||Y_{t+1}||^{1+\nu}\};
 \end{equation}
\STATE 4. Set $t\leftarrow t+1$.
\ENDWHILE
\end{algorithmic}
\end{algorithm}

Many books or articles (Boyd\cite{boyd2011distributed}, Chen \cite{chen2018algorithm}, Sun \cite{sun2014alternating}, Shen \cite{shen2014augmented}, Xu \cite{xu2012alternating}) all numerically show strong convergence behaviour and fast calculation speed for non-convex problems like this type of matrix factorization. But the proof of convergence about applying the ALM to non-convex problems is still a very difficult matter at present. The last four articles assume that ALM algorithm converges to the KKT point under some strong conditions, which are difficult to verify theoretically. Thus this topic is deserving of future research. For a detailed discussion of convergence, we recommend readers to see \cite{chen2018algorithm}.

\subsection {The Accelerated ALM Method~(AALM) for GNRFM}
In this section, we propose two techniques to accelerate the ALM for GNRFM. The techniques aim to reduce the computational complexity at each iteration and the number of iterations. In Figure 1 and Figure 2 of Section 4, we compare the accelerated and unaccelerated ALM on synthetic data.

The first technique conducts inner iterations during only one step for $U$ and $V$ , which is also adopted in \cite{chen2018algorithm}:
\begin{equation}\label{14}
V_{t+1}=(\mu_V I_k+\beta_t{U_t}^TU_t)^{-1}\beta_t{U_t}^T(X-E_t-\frac{Y_t}{\beta_t}),
\end{equation}
\begin{equation} \label{15}
(U_{t+1})_{*i}=max(||(Q_{*i})||_2-\frac{\mu_U}{\beta_t*\xi(V_t)},0)\frac{Q_{*i}}{||Q_{*i}||_2},
\end{equation}
where $Q_{*i}=(U_t-(U_tV_{t}+E_t-X+\frac{Y_t}{\beta_t}){V^T_{t+1}}/\xi(V_t))_{*i}, \ i=1,2...,K.$ Only one $V_{t+1}$ is replaced here to facilitate the later proof. Although we solve (10a) and (U, V) at the same time with only one inner iteration, the numerical values show that the Algorithm 2 converges in about ten steps.

The computational complexity primarily stems from the matrix multiplication at each iteration.
In the present case, some columns from matrix $U$ are zero owing to the utilization of  group norm regularization. This fact inspires the second technique, that is, we delete the zero columns in $U$ and the corresponding rows in $V$  before performing matrix multiplication.  In  numerical experiments, we find that $r\leq K_{t+1} \leq K_t \leq K$. Here, $K_t$ is the number of non-zero columns of $U$ at the $t$-th iteration. Next, we offer a theoretical proof to ensure that deleting the zero vector column does not affect convergence.
\begin{theorem}
When updating $U,V$ by \eqref{15} and \eqref{14}, if the $i$-th column in $U$ is a zero vector column, then this column is always a zero vector in subsequent iterations.
\end{theorem}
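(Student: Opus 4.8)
The plan is to argue by induction on the iteration index, so it suffices to prove the one-step statement: if $(U_t)_{*i}=0$, then $(U_{t+1})_{*i}=0$. Inspecting the soft-thresholding formula \eqref{15}, the $i$-th column of $U_{t+1}$ is a nonnegative scalar multiple of $Q_{*i}/||Q_{*i}||_2$, so it vanishes exactly when $Q_{*i}=0$. The whole argument therefore reduces to showing that the $i$-th column of
\[
Q = U_t - \left(U_tV_t+E_t-X+\tfrac{Y_t}{\beta_t}\right)V_{t+1}^{T}/\xi(V_t)
\]
is zero. Since $(U_t)_{*i}=0$ by hypothesis, this in turn reduces to showing that the $i$-th column of $MV_{t+1}^{T}$ vanishes, where $M := U_tV_t+E_t-X+\tfrac{Y_t}{\beta_t}$. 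Because the $i$-th column of $MV_{t+1}^{T}$ is $M$ times the transpose of the $i$-th row of $V_{t+1}$, everything hinges on one clean fact: \emph{the $i$-th row of $V_{t+1}$ is the zero vector.}

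To establish that fact I would exploit the structure of the least-squares update \eqref{14}. Write $A := \mu_V I_K + \beta_t U_t^{T}U_t$ and $B := \beta_t U_t^{T}(X-E_t-\tfrac{Y_t}{\beta_t})$, so that $V_{t+1}=A^{-1}B$; here $A$ is symmetric positive definite (hence invertible), since $\mu_V>0$ and $U_t^{T}U_t\succeq 0$. The hypothesis $(U_t)_{*i}=0$ forces both the $i$-th row and the $i$-th column of $U_t^{T}U_t$ to vanish, because every entry of that row and column is an inner product against $(U_t)_{*i}$. Consequently the $i$-th column of $A$ equals $\mu_V e_i$, where $e_i$ is the $i$-th standard basis vector of $R^K$; that is, $A e_i=\mu_V e_i$, which gives $A^{-1}e_i=\mu_V^{-1}e_i$, and by symmetry of $A^{-1}$ the $i$-th row of $A^{-1}$ is $\mu_V^{-1}e_i^{T}$. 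At the same time the $i$-th row of $B$ is $\beta_t (U_t)_{*i}^{T}(X-E_t-\tfrac{Y_t}{\beta_t})=0$. Hence the $i$-th row of $V_{t+1}$ is $(A^{-1}B)_{i*} = (A^{-1})_{i*}B = \mu_V^{-1}e_i^{T}B = 0$, as claimed.

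Closing the loop is then immediate: with $(V_{t+1})_{i*}=0$ we get $(MV_{t+1}^{T})_{*i}=M\big((V_{t+1})_{i*}\big)^{T}=0$, so $Q_{*i}=(U_t)_{*i}-0=0$ (the scalar $\xi(V_t)$ is irrelevant, as it multiplies a zero column), and \eqref{15} yields $(U_{t+1})_{*i}=0$. Induction over $t$ finishes the proof. I expect the only genuine subtlety to be the step identifying the $i$-th row of $A^{-1}$ with $\mu_V^{-1}e_i^{T}$; this is precisely the mechanism by which the group-norm regularizer's annihilation of a column of $U$ propagates to the matching row of $V$. I would justify it either by the symmetry argument above, or equivalently by permuting index $i$ to the last position and applying a block-inverse formula to the resulting arrowhead structure of $A$. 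Everything else is bookkeeping with the definitions of $Q$, $M$, and the soft-thresholding map.
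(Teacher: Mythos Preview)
Your proof is correct and follows essentially the same route as the paper: show $(V_{t+1})_{i*}=0$ from $(U_t)_{*i}=0$, deduce $Q_{*i}=0$, and conclude via soft-thresholding. The paper handles the step you flag as subtle more directly: rather than analyzing $A^{-1}$, it rearranges \eqref{14} as $\mu_V V_{t+1}=\beta_t U_t^{T}\bigl(X-E_t-\tfrac{Y_t}{\beta_t}-U_tV_{t+1}\bigr)$ and reads off the vanishing of the $i$-th row immediately from $(U_t^{T})_{i*}=0$, bypassing any need for eigenvector or block-inverse reasoning.
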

\begin{proof}
According to \eqref{14}, we get:
\begin{equation*}
 \mu_V V_{t+1}=\beta_t{U_t}^T(X-E_t-\frac{Y_t}{\beta_t}-U_tV_{t+1}),
\end{equation*}
If $(U_t)_{\star i}=0$, then $(V_{t+1})_{i\star}=\mathbf{0}$. Based on \eqref{15}, we have  $Q_{\star i}=\mathbf{0}$, so $(U_{t+1})_{\star i}=\mathbf{0}$.
\end{proof}

Therefore, the second technique does not affect the convergence, and it speeds up the calculation. We provide a detailed comparison of ALM and AALM in terms of synthetic data in the next section. By applying the above acceleration techniques, we arrive at Algorithm 2 as below.

\begin{algorithm}[htb]
\setstretch{1}
\caption{Accelerated ALM (AALM) for GNRFM}
\label{alg.Framwork}
\begin{algorithmic}[t]
\REQUIRE Data $X$, an overestimated rank $K $, and hypermeters $\mu_U, \mu_V$;  \\
\ENSURE $U_0 \in R^{m \times K}$, $V_0 \in R^{K \times n}$ are obtained by SVD of X ($U_0=\widetilde{U}, V_0=\widetilde{\Sigma}\widetilde{V}$),  $Z_0=Y_0=\mathbf{0}$, $\beta_0$, $\rho>1$, $\zeta \in (0,1)$, $\nu \in (0,1)$;
\WHILE{not convergent}
\STATE 1. With $Y=Y_t$, and $\beta=\beta_t$, compute $U_{t+1},V_{t+1}$ according to \eqref{14} and \eqref{15} to find an approximate solution for \eqref{8}. Delete zero columns in $U$ and corresponding rows of $V$. Then, update $E_{t+1}$  according to \eqref{13};
\STATE 2. Set
\begin{equation}
 Y_{t+1}=Y_{t}+\beta_t(U_{t+1}V_{t+1}+E_{t+1}-X); \notag
 \end{equation}
 3. If $t>0$ and
 \begin{equation}
 ||U_{t+1}V_{t+1}+E_{t+1}-X||\leq \zeta ||U_tV_t+E_t-X||, \notag
 \end{equation}
 then set $\beta_{t+1}=\beta_t.$ Otherwise,set
 \begin{equation}
 \beta_{t+1}=\max\{\rho \beta_t,||Y_{t+1}||^{1+\nu}\}; \notag
 \end{equation}
\STATE 4. Set $t\leftarrow t+1$.
\ENDWHILE
\end{algorithmic}
\end{algorithm}

\subsection{Time complexity }
The time complexity for AALM algorithms depends on two factors: the total number of iterations and the computational complexity of each iteration. The numerical results show that our algorithm converges rapidly and only needs about ten iterations. So, we focus on discussing the computational cost per iteration.

From Algorithm 2, we see that the computational complexity arises form matrix multiplication. At the $t$-th iteration, the computational complexity of AALM is $O(K_t^2n+K_t^2m+K_tmn)$. Here, $K_t$ is the number of non-zero vector columns in $U$ at the $t$-th iteration. Since $r\leq K_{t+1} \leq K_t \leq K$, the time complexity per iteration for AALM is $O(rmn)$.
\subsection{Subspace Segmentation~(Clustering)}
Similar to Liu~\cite{liu2012robust}, we design the following algorithm to perform subspace segmentation~(clustering) based on  $U^\star, V^\star$ obtained by solving \eqref{6}.
\begin{algorithm}[htb]
\setstretch{1}
\caption{Subspace Segmentation~(Clustering)}
\label{alg.Framwork}
\begin{algorithmic}[t]
\REQUIRE  Data $X$, an overestimated rank $K $, hypermeters $\mu_U, \mu_V$, and number $\tilde{k}$ of subspaces; \\
\STATE 1. Obtain the minimizer $U^\star,V^\star$ by Algorithm 2;
\STATE 2. Compute $Z^\star = X^{+}U^\star V^\star$;
\STATE 3. Compute the skinny SVD: $Z^\star = \widehat{U}\widehat{\Sigma}\widehat{V}^T$;
\STATE 4. $\overline{U}_{i*}=(\widehat{U}\widehat{\Sigma}^\frac{1}{2})_{i*}/||(\widehat{U}\widehat{\Sigma}^\frac{1}{2})_{i*}||_2, \ \ i=1,2,...,m;$
\STATE 5. Obtain an affinity matrix $ W=(w_{i,j})=([\overline{U} \overline{U}^T]_{i,j}^2)$;
\STATE 6 Use $W$ to perform $N\_Cut$ and segment the data samples into $\tilde{k}$ clusters.
\end{algorithmic}
\end{algorithm}

In the fifth step of Algorithm 3, each item is squared to ensure that the elements in the affinity matrix are positive. In the third and sixth steps,  one SVD decomposition is needed. For small data sets, this does not take too much time. For large-scale data, the Nystr$\ddot{o}$m approximation is also a popular family of methods to replace SVD\cite{Li2015Large}, especially for Spectral Clustering\cite{he2019fast}. The data set tested in this article is not particularly large, so the calculation of these two parts is not as critical. In summary, Algorithm 3 describes how to use the solution obtained by GNRFM for clustering.
\section{Numerical experiments}
In this section, we test the efficiency of our algorithm and compare it with other algorithms. We implement our algorithm on a PC with  3.2GHZ  AMD Ryzen 7 2700 Processor and 16GB of running memory. All computations are done in Matlab 2016b and few tasks are written with C++. We compare our algorithm with three methods:~ LADMAP(A)~\cite{lin2011linearized}, IRLS~\cite{lu2014smoothed}, and HMFALM~\cite{chen2018algorithm}.
The first method is based on model \eqref{3}, which is faster than other SVD algorithms because it uses an adaptive adjustment penalty term to accelerate convergence and uses skinny SVD instead of traditional SVD. The first method also reduces the complexity from $O(n^3)$ to $O(rn^2)$, where $r$ is the predicted rank of $Z$. IRLS smoothes the objective function by introducing regular terms and then uses the weighted least squares method to solve the variables alternately. Although the singular value decomposition is not required during the algorithm, the matrix product complexity is still $O(n^3)$. During the solution process, the Matlab command lyap is used to solve the Sylvester equation (sometimes the solution of the equation is not unique, and the program is terminated), but in some problems, the number of iterations is less than that of LADMPA(A). HMFALM, which is based on the matrix factorization model \eqref{5}, does not require calculating SVD; so as to be $O(rmn)$ complexity, where $m$ is the dimension of the data. The outer loop finds the rank, which starts from 1 and increases by step $d$. Under each outer loop, the inner loop is calculated iteratively until the stop condition is met to break out of the inner loop, and until the best rank interval is obtained to find the optimal $r$ one by one. HMFALM is faster than the first two algorithms, but it is very sensitive to the hyperparameter $\mu$, and the anti-noise ability is not good without a regularization term.

Our model GNRFM adds the group norm regularization term to the matrix factorization model \eqref{5} and uses the nature of the group norm regularization term; the factor matrix has zero columns so as to adaptively reduce the rank. Although our rank starts to decrease from a large number $K$, it only takes a few steps to iterate from a large rank to a small rank. The numerical results show that our AALM algorithm converges in about ten iterations for  \eqref{6}.
The stopping criteria in our numerical experiments is defined as follows:
\begin{eqnarray}
\centering
||U_tV_t+E_t-X||_F/||X||_F<\varepsilon,
\end{eqnarray}
where $\varepsilon$ is a moderately small number. In the following numerical experiments, we adopt the classic evaluation index indicators, Accuracy~(Acc) and Normalized Mutual Information~(NMI), to measure the clustering results. The larger the Acc and NMI values are, the better the clustering performance is.

\subsection{Experiments on synthetic data}
We first compare the ALM and AALM (before and after acceleration) on the synthetic data. For the inner iteration of ALM, we try two stopping criteria:\\
1. The inner iteration stops in five fixed steps.\\
2. The stop criterion of the inner iteration converges when $|||U_t^{l+1}V_t^{l+1}||_F/||U_t^{l}V_t^{l}||_F-1|<\varepsilon$.

The construction method for noisy synthetic data is the same as in \cite{lin2011linearized,liu2010robust,xiao2015falrr,chen2018algorithm}. The specific construction procedure is as follows.
First, we denote the number of subspaces as $s$, and the number of bases in each subspace as $\widetilde{r}$, while the dimension of the data is $\widetilde{d}$. For the first subspace, we construct the basis $B_1$, which is a random orthogonal matrix with the dimension $\widetilde{d}\times \widetilde{r}$. Basis $\{B_i\}^s_{i=2}$ in the corresponding subspace is obtained by $B_{i+1}=TB_i$, where $T$ is a random rotation matrix. This ensures that these subspaces are independent of each other, and the basis in each subspace is linear independent. Then, in the $i$-th space, we use the basis to generate $p$ samples: $X_i=B_iP_i$, where $P_i\in R^{n\times p}$ is independent and identically distributed, obeying the standard normal distribution $N(0,1)$. Then, we randomly contaminate 20$\%$ of the data, such as the data vector $x$, by adding noise according to the following formula:
\begin{equation}
x=x+\sigma||x||_2\times \eta_{\widetilde{d}\times1},
\end{equation}
where $\eta$ is a zero mean unit variance Gaussian noise vector. Finally, we get the data matrix $X=[X_1,X_2,...,X_s]\in R^{\widetilde{d}\times sp}$.


We denote $s=40, p=50, \widetilde{d}=2000, \widetilde{r}=5$, and $\sigma=0.05$ generate synthetic data as described above. In Figure 1 and Figure 2, ALM and AALM are compared. In Figure 1, the horizontal axis represents time(s), and the vertical axis is obtained after $log_{10}$ transformation of  error $||U_tV_t+E_t-X||_F/||X||_F$. In Figure 2, the vertical axis represents the relative error of $E_0$, which is noise added into the synthetic data. Figure 1 compares the convergence of ALM and AALM, and Figure 2 compares the results of ALM and AALM algorithms to capture noise. The purple line shows the criterion for the inner iteration of ALM, which then adopts the second criterion: each step iterates until the inner iteration converges. The green line represents the inner iteration with five fixed steps. The red line illustrates the inner iteration with a single fixed step, but the zero vector columns are not deleted. The blue line is the inner iteration with one step and deletes the zero vector columns of each $U$. From Figure 1 and Figure 2, we see that when AALM uses the two acceleration techniques, it converges fastest and obtains the  best recovery result. When ALM converges within each inner iteration, it requires the fewest outer iteration steps~(11 steps), but it is the slowest. Comparing the blue line with the red line, we observe that deleting the zero vector columns validates our previous analysis; with no effect on the convergence and result, these deletions save memory space and speed up the calculations. From Figures 1 and 2, we see that the inner iteration does not need to converge; even if one step is adopted, this greatly reduces the calculation time.
\begin{figure}[htb]
    \centering
    \includegraphics[width = 8cm,height=7.5cm]{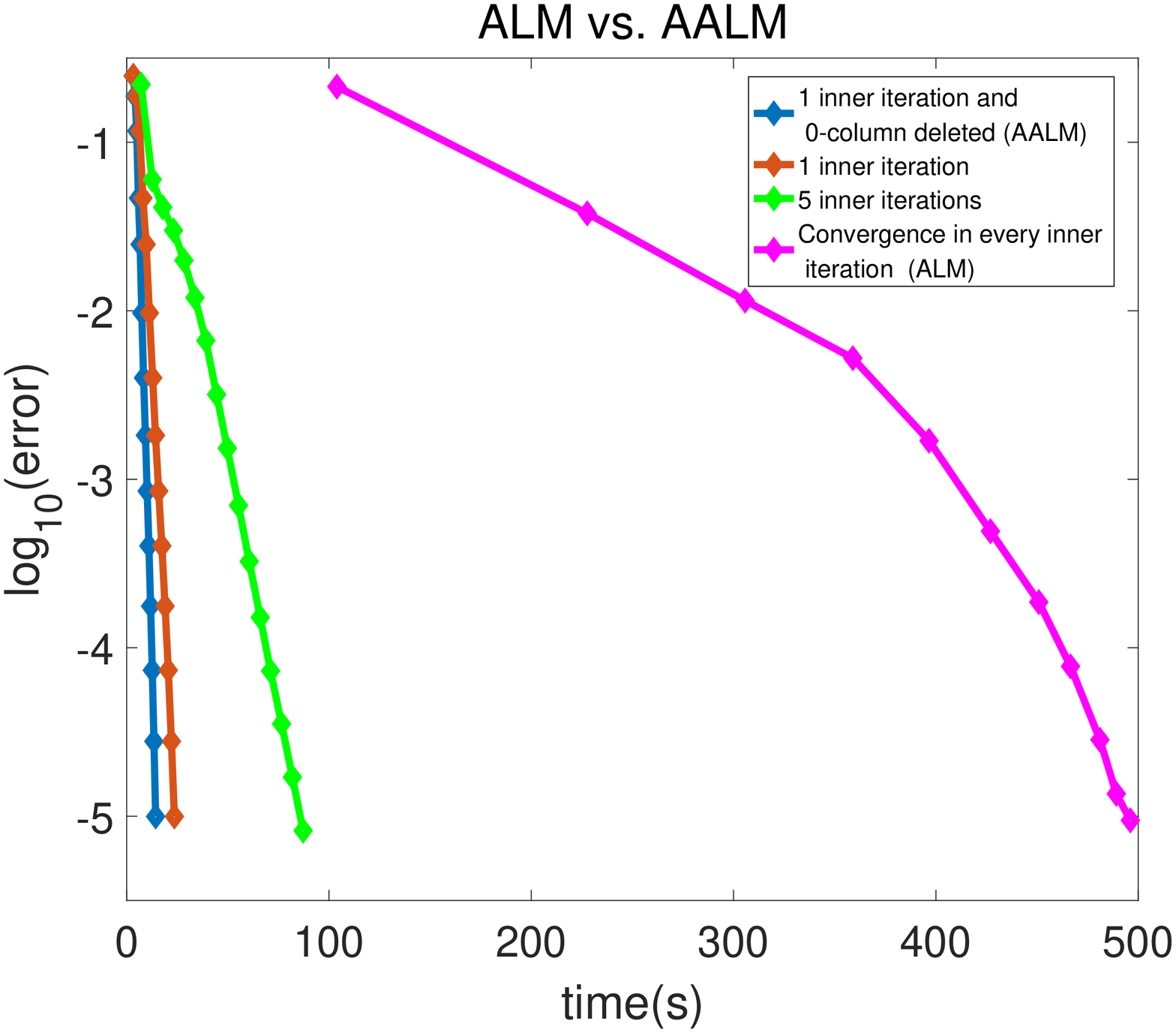}
    \centering
    \caption{Comparison of the ALM and AALM  convergences on synthetic data}
\end{figure}

\begin{figure}[htb]
    \centering
    \includegraphics[width = 8cm,height=7.5cm]{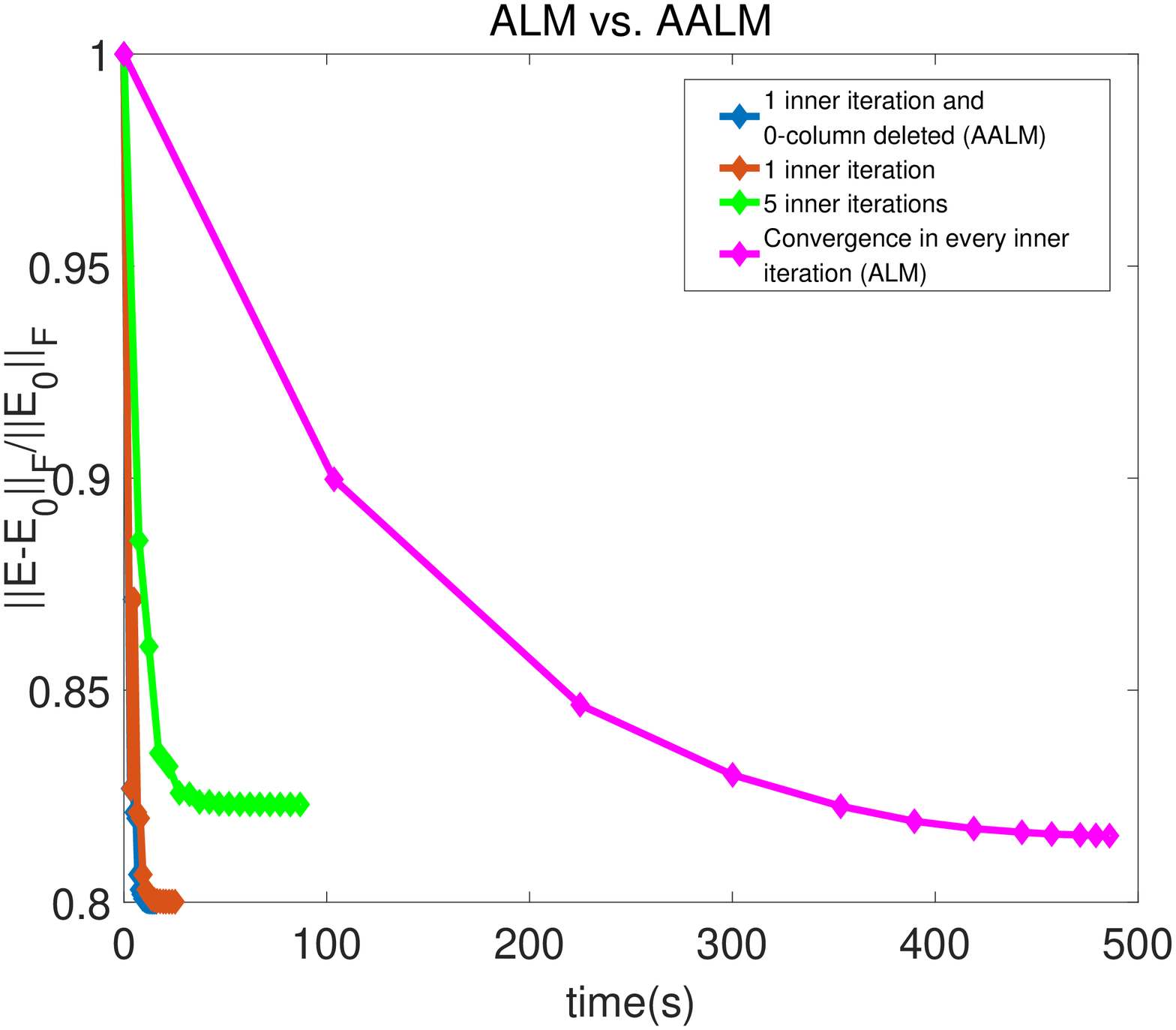}
    \centering
    \caption{Comparison of the results of ALM and AALM on synthetic data}
\end{figure}

 From Table 2 to Table 4, we use LADMAP(A), HMFALM, and AALM separately to obtain the corresponding affinity matrix on synthetic data with different levels of noise, and then Algorithm 3 is adopted to perform clustering. The goal is to verify noise resistance and sensitivity to hyperparameters between the AALM and several other algorithms for comparison. If the noise level is too high, then the information from the data is lost. So, for the intensity of the noise, we select $\sigma=[0.05,0.1,0.2]$, that is 5\%, 10\%, 20\% of the original data. For the selection of hyperparameters, we select $\mu=[0.1,0.2,0.5]$ for the three algorithms LADMAP(A), IRLS, HMFALM (the parameters mentioned by these three articles). Paramter $\mu$ for LADMAP(A) and IRLS is the regularized parameter for resisting noise, From model \eqref{3}, it is apparent that the larger the $\mu$, the better the anti-noise effect. For the HMFALM model, parameter $\mu$ plays a role in finding ranks. With respect to our algorithm, $ (\mu_U=1,\mu_V=10 ) $, $ (\mu_U=1,\mu_V=20) $, and $ (\mu_U=1,\mu_V=50) $ are selected. $\mu_U$ and $\mu_V$ are regularized parameters for resisting noise, and $\mu_U$ leads the column-sparse factor matrix $U$ to adaptively find ranks. The larger the values of $\mu_U$ and $\mu_V$, the better the noise resistance. The greater the value of $\mu_U$, the faster the rank drop. For the other parameters for IRLS and LADMAP(A) algorithms, we select optimal parameters based on the corresponding article, and we select $\varepsilon =10^{-5}$, $\beta_1=1$, $\beta_{\max}=10^5$, and $\rho=2$ for the HMFALM algorithm with the searching gap $d=0.025*n$. We select $\varepsilon =10^{-5}$, $\beta_1=1$, $\beta_{\max}=10^5$, and $\rho=2$ as the other parameters in our algorithm. $\varepsilon$ is the parameter for stopping criteria, and $\beta$ is an increased penalty parameter for the AALM algorithm. For all results, we run the algorithm three times, and take the average as the result for each synthetic data point. We illustrate the best results for each case in bold font.
\begin{table*}[!htbp]
 \caption{ Numerical results on synthetic data ($\sigma=0.05$) }
 \footnotesize
\setlength{\tabcolsep}{1pt}
 \centering
 \begin{tabular}{p{2cm}{r}{r}{r}{r}{r}{r}{r}{r}{r}{r}{r}{r}{r}}
  \toprule[1.5pt]
             &         &\multicolumn{4}{c}{$ (\mu=0.1, \mu_U=1, \mu_V=10)$} &\multicolumn{4}{c}{$( \mu=0.2, \mu_U=1, \mu_V=20)$} &\multicolumn{4}{c}{$ (\mu=0.5, \mu_U=1, \mu_V=50)$}\\
  \toprule[1.5pt]
($s,p,\widetilde{d},\widetilde{r}$) & Method &Time(s) &Ite &Acc(\%) &NMI &Time(s) &Ite &Acc(\%) &NMI &Time(s) &Ite &Acc(\%) &NMI\\
  \midrule[1.5pt]
(10,20,200,5) &HMFALM& 0.0536 &	62 &  26.83 & 0.2647 & 0.2027 & 196 & \textbf{100.00} & \textbf{1.0000} & 0.2757 & 224	& 98.83 &	0.9798 \\
              &LADMAP(A)& 0.4703 &	48 &  98.00 & 0.9654 & 3.5960	& 326 & \textbf{100.00} & \textbf{1.0000} & 14.384 &1209 & 90.00 & 0.8603 \\
              &AALM& \textbf{0.0370} & 	\textbf{11} & \textbf{100.00} & \textbf{1.0000} & \textbf{0.0333} & \textbf{10} & \textbf{100.00}  & \textbf{1.0000} &  \textbf{0.0310} & \textbf{9}	&\textbf{100.00} & \textbf{1.0000} \\

(15,20,200,5) &HMFALM& 0.1193 &	75 & 46.33  & 0.4981 & 0.4767 & 191 & 100.00 & 1.0000 & 1.1207 & 259  &83.44 &	0.7793 \\
              &LADMAP(A)&0.8577 &	49 & 99.00  & 0.9847 & 6.1437	& 273 & 100.00 & 1.0000 & 29.571 & 1149 &84.22 & 0.8187 \\
              &AALM& \textbf{0.0927} &	\textbf{11} & \textbf{100.00} &\textbf{1.0000} & \textbf{0.0803} & \textbf{10}   & \textbf{100.00} &\textbf{1.0000} &  \textbf{0.0690} & \textbf{9}	 &\textbf{100.00} & \textbf{1.0000} \\

(20,25,500,5) &HMFALM&0.6727 & 105 & 91.93 & 0.9025 & 1.7920 & 180  & \textbf{100.00} & \textbf{1.0000} & 5.0133 & 282	 & 81.33 &0.7531\\
              &LADMAP(A)& 2.7910 &	72 & \textbf{100.00} &\textbf{1.0000} & 29.445 & 508 &  99.07 & 0.9872 &  120.57 & 1656 &91.60  &0.8848 \\
              &AALM& \textbf{0.3430} &	\textbf{11} & \textbf{100.00} &\textbf{1.0000} & \textbf{0.2953} & \textbf{10}   & \textbf{100.00} &\textbf{1.0000} & \textbf{0.2450} &\textbf{9}	 &\textbf{100.00}	& \textbf{1.0000}\\

(30,30,900,5) &HMFALM& 3.0670 &	123 &99.82 &0.9986	&11.048	&219 & 99.63  &0.9952 &    22.075 &269 & 80.56  &0.7604\\
              &LADMAP(A)& 16.706 &	99 &\textbf{99.96} &\textbf{0.9995} &  95.704	&446 & 88.63  &0.9277  &   292.40 &900 & 89.33   &0.8829 \\
              &AALM& \textbf{1.8723} &	\textbf{14} &98.37 &0.9807 &  \textbf{1.4730}   &\textbf{11}   &\textbf{100.00} &\textbf{1.0000} &    \textbf{1.1347} &\textbf{9}	&\textbf{100.00} &\textbf{1.0000}\\

(35,40,1400,5)&HMFALM& 13.146 &	165& \textbf{100.00} &\textbf{1.0000} &53.117  & 242 & 80.55 &0.7572  &  53.511 & 242	&80.55 &	0.7589\\
              &LADMAP(A)&97.378 &	182 & 86.62 & 0.9374 &263.81 &401  &96.67 & 0.9562  & 1877.1  &1884 &90.142 &0.8744\\
              &AALM& \textbf{6.3650} &	\textbf{16} &   98.88 & 0.9854 &\textbf{5.0650} &\textbf{13}   &\textbf{99.98}  &\textbf{0.9997} &\textbf{3.7963}  &\textbf{10}	&\textbf{100.00} &\textbf{1.0000}\\

(40,50,2000,5) &HMFALM& 33.242 &	142 &\textbf{99.98} & \textbf{0.9998} &122.37 &230 &80.53 & 0.7541 & 123.54 &230 &	80.62 &	0.7572\\
               &LADMAP(A)&477.31 &	344 &91.43 & 0.9167	 &404.64  &266 &93.73 &0.9181 & 7957.4 &3330 &	88.17 &	0.8487\\
               &AALM&\textbf{16.472} &	\textbf{17} & 98.67 & 0.9832 & \textbf{13.770}	& \textbf{14} &\textbf{99.42} &\textbf{0.9928} &\textbf{10.161} &\textbf{10} &	\textbf{100.00} &	\textbf{1.0000}\\

  \bottomrule[1.5pt]
 \end{tabular}
 \label{bs2}
\end{table*}

\begin{table*}[!htbp]
 \caption{ Numerical results on synthetic data ($\sigma=0.1$) }
 \footnotesize
\setlength{\tabcolsep}{1pt}
 \centering
 \begin{tabular}{p{2cm}{r}{r}{r}{r}{r}{r}{r}{r}{r}{r}{r}{r}{r}}
  \toprule[1.5pt]
             &         &\multicolumn{4}{c}{$ (\mu=0.1, \mu_U=1, \mu_V=10)$} &\multicolumn{4}{c}{$( \mu=0.2, \mu_U=1, \mu_V=20)$} &\multicolumn{4}{c}{$ (\mu=0.5, \mu_U=1, \mu_V=50)$}\\
  \toprule[1.5pt]
($s,p,\widetilde{d},\widetilde{r}$) & Method &Time(s) &Ite &Acc(\%) &NMI &Time(s) &Ite &Acc(\%) &NMI &Time(s) &Ite &Acc(\%) &NMI\\
  \midrule[1.5pt]
(10,20,200,5) &HMFALM& \textbf{0.0346} &	46 & 20.67 & 0.1716 &0.1597  &166 &99.83  &0.9971 &0.4837 & 294 &	83.00 & 	0.7937\\
              &LADMAP(A)& 0.3403 &	48 & 96.50 & 0.9399 &3.0467  &283 &99.33  &0.9885 &10.223 & 728 &	84.67 &	0.8042\\
              &AALM& 0.0460 &	\textbf{11} &\textbf{99.33} &\textbf{0.9884} &\textbf{0.0387}  &\textbf{10}   &\textbf{100.00} &\textbf{1.0000} &\textbf{0.0320} &\textbf{9}   &   \textbf{99.67} &	\textbf{0.9942}\\

(15,20,200,5) &HMFALM& \textbf{0.0700} &	40 & 13.44 & 0.1552 &0.4567  &182 &99.78  &0.9967 &1.1640 &263 &	81.00 &	0.7436\\
              &LADMAP(A)&0.7403 &	47 & 98.45 & 0.9776 &5.1337  &229 &99.67 &0.9957 & 36.790	 &1028& 	83.00 &	0.8153\\
              &AALM& 0.1087 &	\textbf{10} &\textbf{100.00} & \textbf{1.0000} &\textbf{0.0967} &\textbf{9}     &\textbf{100.00} &\textbf{1.0000} &\textbf{0.0817} &\textbf{9}    &\textbf{99.78} &	\textbf{0.9967}\\

(20,25,500,5) &HMFALM&0.5487 &	89 & 40.93 & 0.4913 &2.7977 &213  &96.20 & 0.9654 &4.7137  &268 &	81.00 &	0.7515\\
              &LADMAP(A)&3.0330 &	71 & \textbf{99.93} & \textbf{0.9991} &32.993 &553  &70.40 & 0.8203 &151.55  &1664 &	82.07 &	0.7622 \\
              &AALM& \textbf{0.4077} &	\textbf{11} & 99.27 & 0.9897 &\textbf{0.3210} &\textbf{9}     &\textbf{100.00} &\textbf{1.0000} &\textbf{0.2960}  &\textbf{9}     &\textbf{100.00} &	\textbf{1.0000} \\

(30,30,900,5) &HMFALM& 2.4840 &	97 & 93.89 & 0.9254 &21.109 &256 &80.48  &0.7607 & 21.159	 &256  &80.59  &	0.7651 \\
              &LADMAP(A)&23.521 &	113 &93.96 & \textbf{0.9672} &130.25 &509 &89.11 &0.8645 & 823.41  &2217 &84.04 &	0.8034\\
              &AALM& \textbf{2.1263} &	\textbf{14}  &\textbf{97.00} & 0.9622 &\textbf{1.5400} &\textbf{10}   &\textbf{99.89} &\textbf{0.9986} & \textbf{1.3823}  &\textbf{9}      & \textbf{100.00} &	\textbf{1.0000}\\

(35,40,1400,5)&HMFALM& 16.596 &	141 &98.76 & 0.9843 &50.914 &228 &80.57 &0.7582 & 51.119 &228   &	80.67 &	0.7593\\
              &LADMAP(A)&201.92 &	300 &65.14 & 0.7546 &1817.8 &2023 &84.79 &0.8120 &1304.0 &1233 &	85.43 &	0.8155\\
              &AALM& \textbf{6.9380} &	\textbf{17}  &\textbf{99.57}  &\textbf{0.9945} &\textbf{5.9940} &\textbf{14}    &\textbf{99.74} &\textbf{0.9965}  &\textbf{3.9077} & \textbf{9} & \textbf{100.00} & \textbf{1.0000}\\

(40,50,2000,5) &HMFALM& 116.01 &	216 &80.50 &0.7557 &117.16 &216  &80.65 &0.7555  &118.17  &216 &	80.43 &	0.7542\\
               &LADMAP(A)&600.83 &	391 &89.95 &0.8780 &4628.1 &2151 &82.98 &0.7851 &3577.4 &1549 &	87.18 &	0.8350\\
               &AALM&\textbf{16.258} &	\textbf{17} &\textbf{99.60} & \textbf{0.9947} &\textbf{14.914} &	\textbf{15}    &\textbf{99.62}  &\textbf{0.9949} &\textbf{11.301}  &\textbf{11}   &	\textbf{100.00} &	\textbf{1.0000}\\

  \bottomrule[1.5pt]
 \end{tabular}
 \label{bs2}
\end{table*}

\begin{table*}[!htbp]
 \caption{ Numerical results on synthetic data ($\sigma=0.2$) }
 \footnotesize
\setlength{\tabcolsep}{1pt}
 \centering
 \begin{tabular}{p{2cm}{r}{r}{r}{r}{r}{r}{r}{r}{r}{r}{r}{r}{r}}
  \toprule[1.5pt]
             &         &\multicolumn{4}{c}{$ (\mu=0.1, \mu_U=1, \mu_V=10)$} &\multicolumn{4}{c}{$( \mu=0.2, \mu_U=1, \mu_V=20)$} &\multicolumn{4}{c}{$ (\mu=0.5, \mu_U=1, \mu_V=50)$}\\
  \toprule[1.5pt]
($s,p,\widetilde{d},\widetilde{r}$) & Method &Time(s) &Ite &Acc(\%) &NMI &Time(s) &Ite &Acc(\%) &NMI &Time(s) &Ite &Acc(\%) &NMI\\
  \midrule[1.5pt]
(10,20,200,5) &HMFALM& \textbf{0.0310} &38 & 16.33 &0.1098 &0.2410 &	200 &63.33 & 0.7243 &0.4800 &283 & 82.17 & 0.7859\\
              &LADMAP(A)& 0.5753 &	69 & 85.50 &0.8355 &4.2237 &	390 &82.50 & 0.8349 &41.262 &2486 &85.50 &0.7997\\
              &AALM& 0.0470 &	\textbf{10} & \textbf{88.67} &\textbf{0.8455} &\textbf{0.0420} &	\textbf{9}    &\textbf{94.00} & \textbf{0.9081}  &\textbf{0.0380} &\textbf{9}    &\textbf{96.67}  &\textbf{0.9457}\\

(15,20,200,5) &HMFALM& 0.1303 &	83 &42.11  &0.4687 &0.8920 &	237 &56.00 & 0.6786 &1.1387 & 254 &81.89  &0.7575\\
              &LADMAP(A)&1.3393 &	72 &81.22  &0.8451 &9.1343 &	376 &76.00 &0.8124  &84.673 &2236 &67.11 &0.7068\\
              &AALM& \textbf{0.1147}&\textbf{10} &\textbf{91.78}  &\textbf{0.8862} &\textbf{0.0983} &	\textbf{9}     &\textbf{97.45} &\textbf{0.9623} &\textbf{0.0937}  &\textbf{9}      &\textbf{98.00}  &\textbf{0.9706}\\

(20,25,500,5) &HMFALM&1.0660 &	135 &79.87 &0.8076 &4.5273 &260 &81.00 &0.7492 &4.6477  &260  &80.87  &0.7487\\
              &LADMAP(A)& 21.275 &	330 &28.27 &0.4432 &52.970 &754 &32.47 &0.4836 &222.36  &2369 &80.13  &0.7472\\
              &AALM& \textbf{0.4183} &	\textbf{11}   &\textbf{87.80} &\textbf{0.8449} &\textbf{0.3400} &\textbf{9}    &\textbf{94.93} &\textbf{0.9307}  &\textbf{0.2927} &\textbf{8}       &\textbf{97.40}  &\textbf{0.9641}\\

(30,30,900,5) &HMFALM& 10.822 &	207 &82.22 &0.8161 &20.361 &248 &80.63 &0.7633 &20.667  &248   &80.93 &0.7694\\
              &LADMAP(A)& 104.92 &	427 &5.52 &0.0614 &830.04 &2458 &77.56 &0.8045 &1664.2 &4334 &80.59 &0.7725\\
              &AALM& \textbf{2.0727} &	\textbf{14}  &\textbf{84.52} &\textbf{0.8420} &\textbf{1.6640} &\textbf{11}   &\textbf{86.11} &\textbf{0.8597} &\textbf{1.2560} &\textbf{8}      &\textbf{96.19} &\textbf{0.9517} \\

(35,40,1400,5)&HMFALM& 52.310 &	219 &80.52 &0.7586 &49.367 &219 &80.71 &0.7599 &51.278 &220  &80.60 &0.7581\\
              &LADMAP(A)&535.46 &	652 &82.93 &0.7903 &2719.2 &2652 &79.95 &0.7857 &3163.2 &2939 &81.00 &0.7682\\
              &AALM& \textbf{6.3350} &	\textbf{16}  &\textbf{91.12} &\textbf{0.8873} &\textbf{5.5310} &	\textbf{14} & \textbf{84.52} &\textbf{0.8490} &\textbf{3.3213} &	\textbf{8} & \textbf{95.31} & \textbf{0.9403}\\

(40,50,2000,5) &HMFALM& 118.18 &	207 &80.50 &0.7534 &114.78 &207 &80.63 &0.7558 &114.12 & 207 & 80.52 & 0.7539\\
               &LADMAP(A)&12348. &	6162 &78.12 &0.7707 &8952.7 &3885 &81.25 &0.7630 &8559.3 &3687 &79.83 &0.7691\\
               &AALM&\textbf{14.797} &	\textbf{16} &\textbf{84.90} &\textbf{0.8279} &\textbf{13.922} &	\textbf{15} &\textbf{88.17} &\textbf{0.8759} &\textbf{9.1117} &	\textbf{9} &\textbf{84.33}& \textbf{0.8506}\\

  \bottomrule[1.5pt]
 \end{tabular}
 \label{bs2}
\end{table*}

\begin{figure}[htb]
    \centering
    \includegraphics[width = 8cm,height=7.5cm]{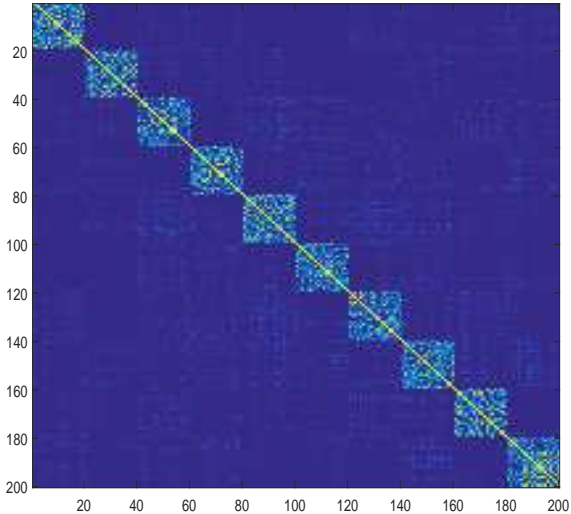}
    \centering
    \caption{The affinity matrix values for synthetic data}
\end{figure}

\begin{figure}[htb]
    \centering
    \includegraphics[width = 8cm,height=7.5cm]{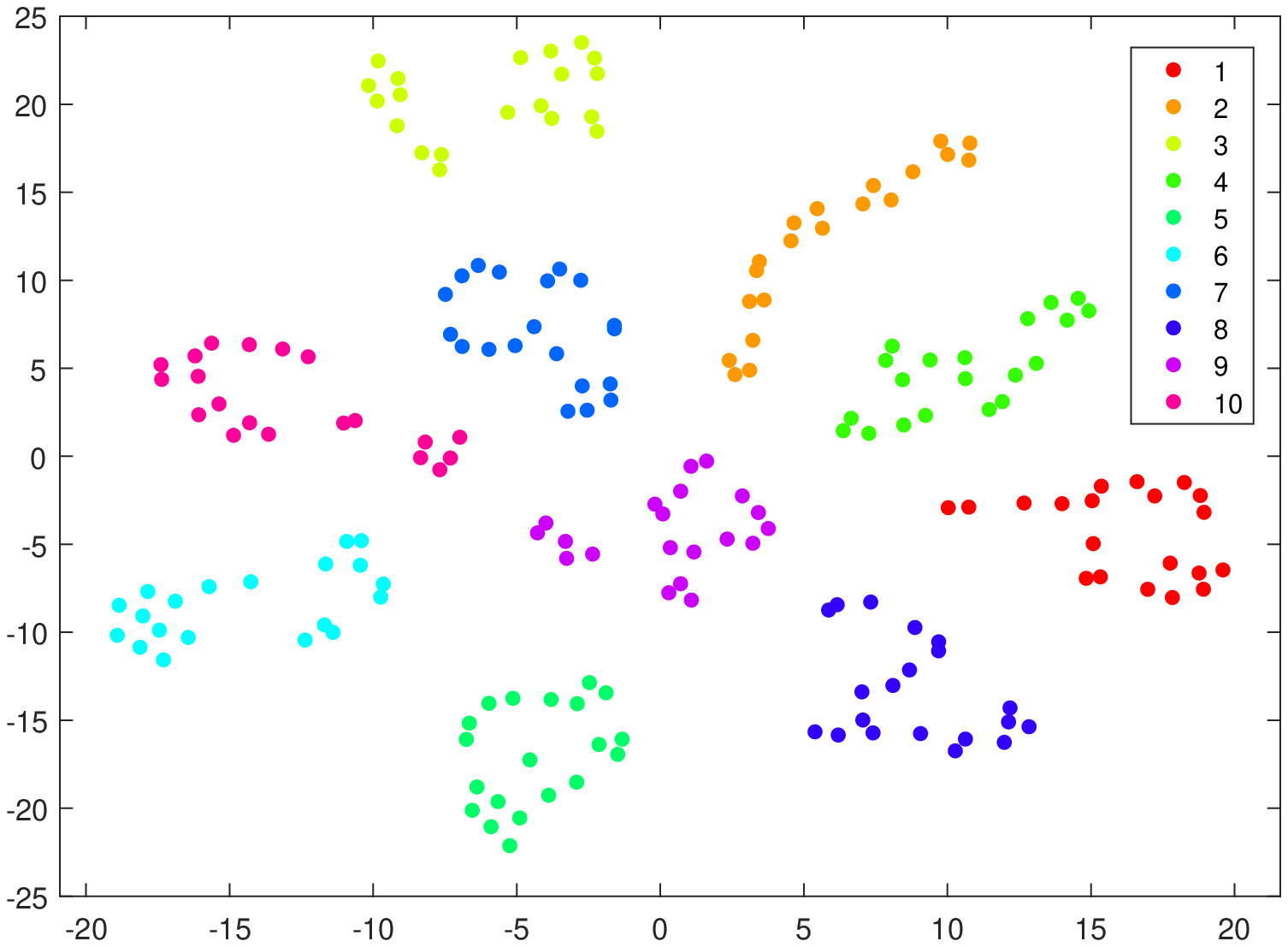}
    \centering
    \caption{Visualization of the affinity results for synthetic data using t-SNE}
\end{figure}

Table 2 to Table 4 feature synthetic data with different degrees of noise and different dimensions, and almost all of our AALM algorithm results perform the best in terms of clustering and high speed. For Table 2, which features a low-level noise situation ($\sigma=0.05$), the most accurate of the three algorithms is more than 90\% accurate for different dimensional problems, but the HMFALM and LADMAP(A) algorithms are very sensitive to parameters; that is, the results change greatly if parameters change a little, and there is no consistent parameter $\mu$ for all synthetic data. For Table 3, which features moderate noise ($\sigma=0.1$), our AALM algorithm results are still the best, with a speed increase of more than six times that of HMFALM and about forty times greater than LADMAP(A). Particularly in a high-dimensional situation $d = 2000$, our AALM is more accurate than the other two algorithms by more than 10\%. For Table 4, which has the highest noise ($\sigma=0.2$), our clustering accuracy and NMI are significantly higher than those of the other two algorithms. As a result, although our model \eqref{6} has one more hyperparameter than \eqref{5}, our model is not sensitive to hyperparameters, while the clustering result of the other two models is greatly affected by the hyperparameter $\mu$. In addition, when $\sigma=[0.05,0.1,0.2]$, our clustering accuracy is the best, and even in some cases it is upwards of 20\% more accurate than other algorithms. So, the GNRFM model, when introduced with the group norm regularization term, has good noise immunity and is robust.

Specifically, we take $(s, p, \widetilde{d}, \widetilde{r} = 10,20,200,5)$ and ($\mu_U = 1, \mu_V = 50$) from Table 2 and draw Figure 3 and Figure 4 for verification. At this time, the data come from ten subspaces; each space has 20 vectors, and each vector is 20 dimensions. Figure 3 shows the values of an affinity matrix, which is obtained by Algorithm 3 after the AALM algorithm solves the GNRFM model with color images. It can be seen that the affinity matrix is clearly divided into ten blocks, and the correlation between different subspaces is very small, and the internal correlation in the subspace is very high, the affinity matrix is well portrayed and accurately segments the subspace. In Figure 4, we use t-SNE\cite{Laurens2008tSNE} to visualize our affinity results. From Figure 4, we see that the affinity results are clearly divided into ten categories, so our models and algorithms are effective for subspace segmentation.
\subsection{Experiments on real data}
In this section, we test the clustering effectiveness of our algorithm in sport motion segmentation datasets (the Hopkins155 dataset \cite{tron2007benchmark} and HARUS dataset\cite{anguita2012human}) and face segmentation datasets (the Extended Yale B dataset \cite{georghiades2001few} and CMU PIE dataset \cite{sim2002the}). Description of the four data sets are shown in Table 5.
\begin{table*}[!htbp]
\caption{Description of real data }
\footnotesize
\setlength{\tabcolsep}{3pt}
\centering
\begin{tabular}{|c|c|c|c|c|c|c|c|}
\hline
\hline
Dataset &  Content  & Type & Subdataset & Dimension & Sample  & Number of Categories \\
\hline
\hline
Hopkins155 & Sport Motion & Videos &156 &72 &39$\sim$550 & 2 or 3 \\
\hline
Harus & Sport Motion & Sensor signals &1 & 561 &10299 & 6 \\
\hline
\multirow{2}*{Extended Yale B} & \multirow{2}*{Faces} & \multirow{2}*{Pictures} & 5 subjects & 1024 & 320 & 5 \\
\cline{4-7}& & &10 subjects &1024 &640 &10 \\
\hline\multirow{2}*{CMU PIE} & \multirow{2}*{Faces} & \multirow{2}*{Pictures} & 5 subjects & 1024 & 850 & 5 \\
\cline{4-7}& & &10 subjects &1024 &1700 &10 \\
\hline
\hline
\end{tabular}
\end{table*}

 The Hopkins155 dataset contains 156 data sequences; each video data sequence contains from 39 to 550 data vectors (from two or three motion modes), and the dimension in each data vector is 72 (24 frames $\times$ 3). We specify the number of classes (two or three classes) in each data sequence, and take advantage of HMFALM, LADM, IRLS, and AALM respectively in these 156 sequences to solve the affinity matrix and conduct clustering. In Table 4, we give the total accuracy, average NMI, average iteration steps, and average time for the data series in the conditions of two motions, three motions, and all motions. Among them, HMFALM, LADM and IRLS, we select the value of $\mu=2.4$~(the optimal parameters tested by the authors in their article). With respect to the AALM algorithm, we select $\mu_U=0.005, \mu_V=3$.
\begin{table*}[!htbp]
\caption{Comparison of motion segmentation using different algorithms on the Hopkins155 dataset }
\footnotesize
\setlength{\tabcolsep}{3pt}
\centering
\begin{tabular}{|c|c|c|c|c|c|c|c|c|c|c|c|c|c|}
\hline
\hline
\multirow{2}*{Problem} &\multicolumn{4}{|c|}{Two Motions} &\multicolumn{4}{|c|}{Three Motions} &\multicolumn{4}{|c|}{All Motions} \\ \cline{2-13}
&Time &Iter. &Acc($\%$)& NMI &Time &Iter. &Acc($\%$) &NMI &Time  &Iter. &Acc($\%$) &NMI    \\
 \hline
 \hline
HMFALM& 0.0481& 124 &96.34 &0.8794 &0.0726 &137 &95.06 &0.8902 &0.0538  &127 &95.95 &0.8819   \\ \hline
LADMAP(A)& 74.064& 28724 &96.43 &0.8884 &120.20 &37564 &95.71 &0.9103 &84.711  &30764 &96.21 &0.8935    \\ \hline
IRLS& 33.819& 189 &97.21 &\textbf{0.9053} &63.148 &182 &\textbf{95.90} &\textbf{0.9140} &40.5873  &188 &96.81 &\textbf{0.9073}    \\ \hline
AALM& \textbf{0.0224}& \textbf{14} &\textbf{97.63} &0.8979  &\textbf{0.0285} &\textbf{14} &95.17 &0.8970 &\textbf{0.0238}  &\textbf{14} &\textbf{96.87} &0.8977   \\ \hline
\hline
\end{tabular}
\end{table*}

As seen from Table 6, our algorithm is faster than the other three algorithms. For the NMI index with three motions, the results of our algorithm are not as good as IRLS, but our algorithm is much faster than  IRLS. For video data, instant clustering is very important. Overall our models and algorithms have achieved the best results.

The Human Activity Recognition Using Smartphones Dataset~(HARUS) contains sensor signals data collected by sensors with a group of 30 volunteers carrying out 6 activities. The HARUS dataset contains 10,299 signals, and each signal is a 561-dimensional feature.
In Table 7, we illustrate the accuracy, NMI, iteration steps, and time for four algorithms used on the HARUS dataset. For HMFALM, LADMAP(A), and IRLS, we select $\mu=0.01$~(the optimal parameters tested by \cite{xiao2015falrr}, because a large $\mu$  value is too time-consuming),  with respect to the AALM algorithm; we also select $\mu_U=0.005, \mu_V=3$. Because the data dimensions are too large, instead of using Algorithm 3 to convert $Z$ into an affinity matrix (SVD requires too much time), we use formula $W=(|Z|+|Z^T|)/2$ to conduct the transformation.
\begin{table}[!htbp]
\caption{Comparison of face clustering with different algorithms on the HARUS dataset}
\footnotesize
\setlength{\tabcolsep}{3pt}
\centering
\begin{tabular}{|c|c|c|c|c|}
\hline
\hline
HARUS &Time &Iter. &Acc($\%$) &NMI   \\
 \hline
 \hline
HMFALM& 81.170& 222 &55.44 & 0.5497  \\ \hline
LADMAP(A)& 15706& 7220 &59.27  &0.5220    \\ \hline
IRLS& N.A & N.A &N.A & N.A  \\ \hline
AALM& \textbf{1.3980}& \textbf{5} &\textbf{80.34}   & \textbf{0.6683}  \\ \hline
\hline
\end{tabular}
\end{table}

\begin{figure}[htb]
    \centering
    \includegraphics[width = 8cm,height=7.5cm]{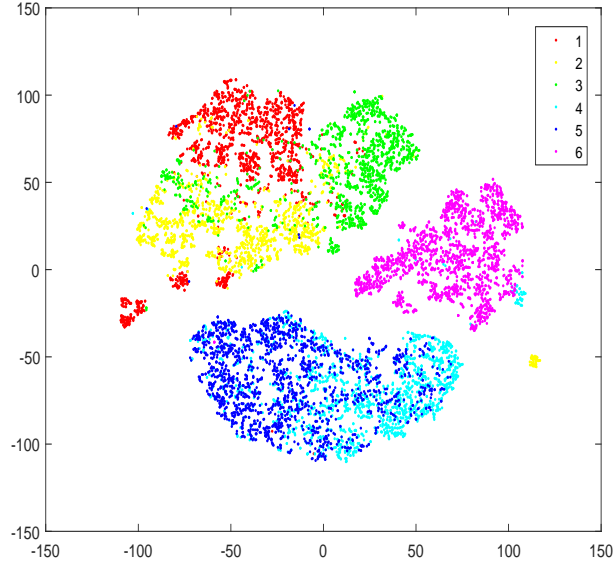}
    \centering
    \caption{Visualization of the the affinity results of HARUS using t-SNE}
\end{figure}

From Table 7, we see that for the large data set, HARUS, the IRLS algorithm does not work. In the remaining algorithms, our AALM algorithm takes less than two seconds, and it is almost 20\% more accurate than the other two algorithms. Then, we use t-SNE to visualize the affinity results in Figure 5; we see that the data is roughly separated into six categories with a little error.
\begin{figure*}[htb]
    \centering
    \includegraphics[width = 15cm,height=5.5cm]{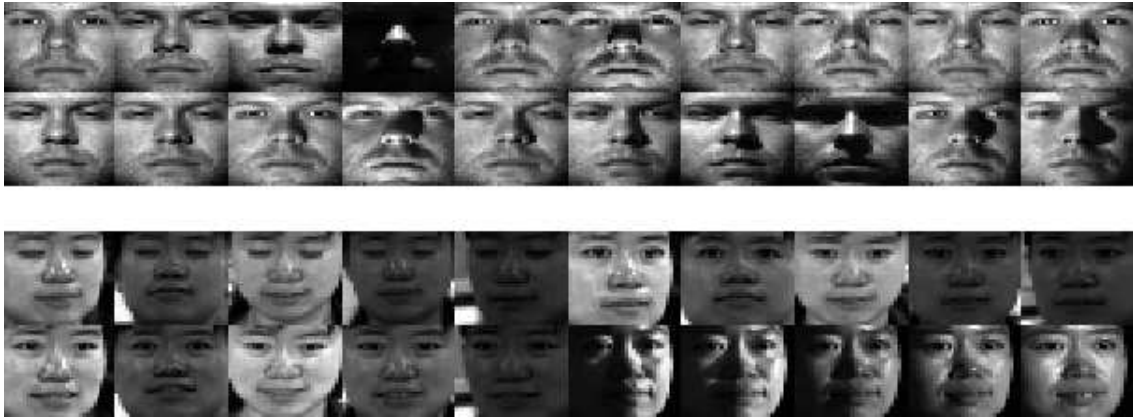}
    \centering
    \caption{Example face image from the Extended Yale B dataset and CMU PIE dataset}
\end{figure*}

In Figure 6, we show the Extended Yale B dataset and CMU PIE dataset. The Extended Yale B dataset contains 38 subjects (people), and each subject has 64 facial images. The CMU PIE dataset contains 68 subjects (people), and each subject has 170 facial images. The Extended  Yale B dataset also contains noise from different angles of light. Figure 6 above shows 20 pictures from one person's face where data has lighting noise such that some faces cannot be seen clearly or even become dark. For instance, the image in the fourth picture cannot even be identified by the human eye.
The CMU PIE dataset featrues human expressions in addition to light noise, so clustering is more difficult. Similar to Lu \cite{lu2014smoothed}, we conduct two experiments in each dataset by constructing the first five subjects and the first ten subjects into a dataset X. First, we resize all the pictures to 32$\times$ 32. Second, to reduce noise, we project them to a 30-dimensional (80) subspace for five subjects and a 60-dimensional (160) subspace for ten subjects by principle component analysis~(PCA) for the Extended Yale B (CMU PIE) dataset. Third, by applying HMFALM, LADM, IRLS, and AALM to solve the low-rank representation problem, we obtain different affinity matrices. At last, we compare the clustering results using Algorithm 3 with different affinity matrices. We set parameter $\mu=1.5$ for HMFALM, LADM, IRLS (the optimal parameters tested by the authors in their article), and $(\mu_U=1, \mu_V=20)$ for AALM.
\begin{table*}[!htbp]
\caption{Comparison of face clustering  by different algorithms on the Extended Yale B}
\footnotesize
\setlength{\tabcolsep}{3pt}
\centering
\begin{tabular}{|c|c|c|c|c|c|c|c|c|c|}
\hline
\hline
\multirow{2}*{Problem} &\multicolumn{4}{|c|}{10 subjects} &\multicolumn{4}{|c|}{5 subjects} \\ \cline{2-9}
&Time &Iter. &Acc($\%$) &NMI &Time &Iter. &Acc($\%$) &NMI   \\
 \hline
 \hline
HMFALM& 0.4130& 396 &\textbf{81.87} &\textbf{0.7680} & 0.1250& 336 &\textbf{88.44} &\textbf{0.7839} \\ \hline
LADMAP(A)& 94.121& 8429 &\textbf{81.87} &\textbf{0.7680}  & 16.872&4324 &\textbf{88.44} &\textbf{0.7839}  \\ \hline
IRLS& 86.627& 107 &81.56 &0.7658 & 14.779& 102 &88.12 &0.7800  \\ \hline
AALM& \textbf{0.0520}& \textbf{16} &\textbf{81.87} &\textbf{0.7680}   & \textbf{0.0280}& \textbf{16} &\textbf{88.44}  &\textbf{0.7839}\\ \hline
\hline
\end{tabular}
\end{table*}

\begin{figure}[htb]
    \centering
    \includegraphics[width = 8cm,height=7.5cm]{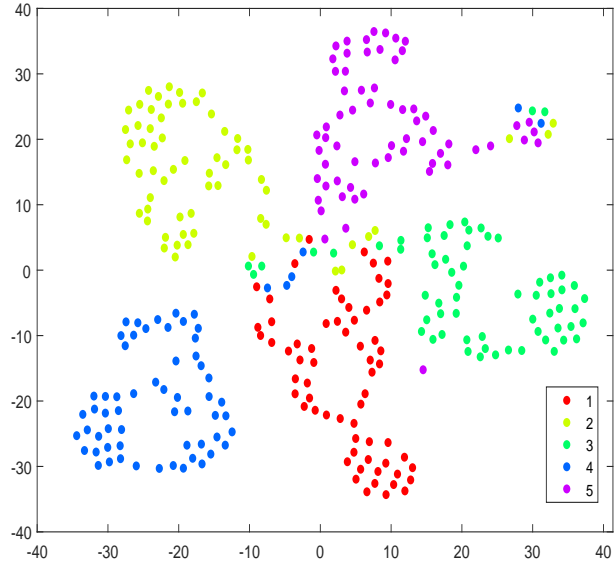}
    \centering
    \caption{Visualization of the affinity results of the Extended Yale B using t-SNE}
\end{figure}

\begin{table*}[!htbp]
\caption{Comparison of face clustering by different algorithms on the CMU PIE}
\footnotesize
\setlength{\tabcolsep}{3pt}
\centering
\begin{tabular}{|c|c|c|c|c|c|c|c|c|c|}
\hline
\hline
\multirow{2}*{Problem} &\multicolumn{4}{|c|}{10 subjects} &\multicolumn{4}{|c|}{5 subjects} \\ \cline{2-9}
&Time &Iter. &Acc($\%$) &NMI &Time &Iter. &Acc($\%$) &NMI   \\
 \hline
 \hline
HMFALM& 7.8790& 597 &44.82 &0.5550 & 1.0100& 531 &72.94 &0.8063 \\ \hline
LADMAP(A)& 5341.8& 50000 &44.82 &0.5550  & 423.98&20908 &72.94 &0.8063  \\ \hline
IRLS& 2004.5& 124 &44.82 &0.5550 & 192.80& 116 &72.94 &0.8063  \\ \hline
AALM& \textbf{0.4860}& \textbf{15} &\textbf{48.00} &\textbf{0.5871}   & \textbf{0.0910}& \textbf{16} &\textbf{73.06}  &\textbf{0.8076}\\ \hline
\hline
\end{tabular}
\end{table*}
As can be seen from Table 8, the result of the IRLS algorithm is the worst. The remaining three algorithms achieve the same accuracy and NMI, but our algorithm is the fastest. We use t-SNE to visualize the affinity results of the Extended Yale B dataset. From the five subjects in Figure 7, we see that the data is roughly separated into five categories with a little error. From Table 9, which illustrates the CMU PIE dataset, it is apparent that our algorithm achieves the best results. For the large-scale problem of ten subjects in Table 9, our algorithm is much faster than the others. Our algorithm is fast and effective for face clustering.

In summary, our algorithm achieves the best accuracy with the fastest computing speed in real problems with sports motion data (Hopkins155 dataset and HARUS dataset) and facial data (Extended Yale B dataset and CMU PIE dataset). In addition, our AALM algorithm can immediately provide the affinity results for data.
\section{Conclusion}

In this paper, we first propose the use of the group norm regularized factorization model
(GNRFM) to solve the problem of low-rank representation with minimal rank, and then we apply it to subspace segmentation. The calculation of the traditional nuclear norm approximation method is very complicated as $O(n^3)$, while the calculation of our group norm model is
$O(rmn)$. Compared with the traditional factorization model, which greedily searches for ranks, we adaptively find the rank by introducing the group norm regularization term, which
greatly reduces the number of iteration steps. In addition, the group norm regularization
term  also boasts anti-noise effects and makes the model more robust. On synthetic
data and real data, our GNRFM model and the algorithm designed for it achieve excellent clustering results. Furthermore, our AALM algorithm only requires about ten iterations, so the speed is much faster than traditional algorithms; thus, our algorithm plays a role in
the process of instant rapid clustering in the era of big data. In the future, we will also consider applying the affinity matrix obtained by the GNRFM to AP clustering \cite{frey2007clustering}, Multi-view affinity learning \cite{wang2013multi-exemplar,liu2019adaptively}, and spectral clustering with interplay manner \cite{liu2019adaptively}. In particular, our group norm regularized factorization  method can be used for a series of low-rank problems. Some theoretical proofs given by authors will be provided in the future.

\end{document}